\newcommand{\G}{{\mathcal{G}}}
\newcommand{\X}{{\mathcal{X}}}
\newcommand{\ptn}{{\Delta}}
\newtheorem{defn}{Definition}
\newtheorem{thm}{Theorem}
\title{ Block-Value Symmetries in Probabilistic Graphical Models \thanks{This work is accepted at UAI 2018 [Madan et. al. 2018]}
}
\author{\bf Gagan Madan,  Ankit Anand, Mausam and Parag Singla \\
Indian Institute of Technology Delhi \\
gagan.madan1@gmail.com, \{ankit.anand, mausam, parags\}@cse.iitd.ac.in} 
\date{}
\begin{document}
\maketitle
\begin{abstract}

One popular way for lifted inference in probabilistic graphical models is to first merge symmetric states into a single cluster (orbit) and then use these for downstream inference, via variations of orbital MCMC \cite{niepert12}. These orbits are represented compactly using permutations over variables, and variable-value (VV) pairs, but they can miss several state symmetries in a domain.

We define the notion of permutations over block-value (BV) pairs, where a block is a set of variables. BV strictly generalizes VV symmetries, and can compute many more symmetries for increasing block sizes. To operationalize use of BV permutations in lifted inference, 
we describe 1) an algorithm to compute BV permutations given a block partition of the variables, 2) BV-MCMC, an extension of orbital MCMC that can sample from BV orbits, and 3) a heuristic to suggest good block partitions. 
Our experiments show that BV-MCMC can mix much faster compared to vanilla MCMC and orbital MCMC.
	
\end{abstract}
\section{INTRODUCTION}
\begin{figure*}
\centering

{\includegraphics[width=0.95\textwidth]{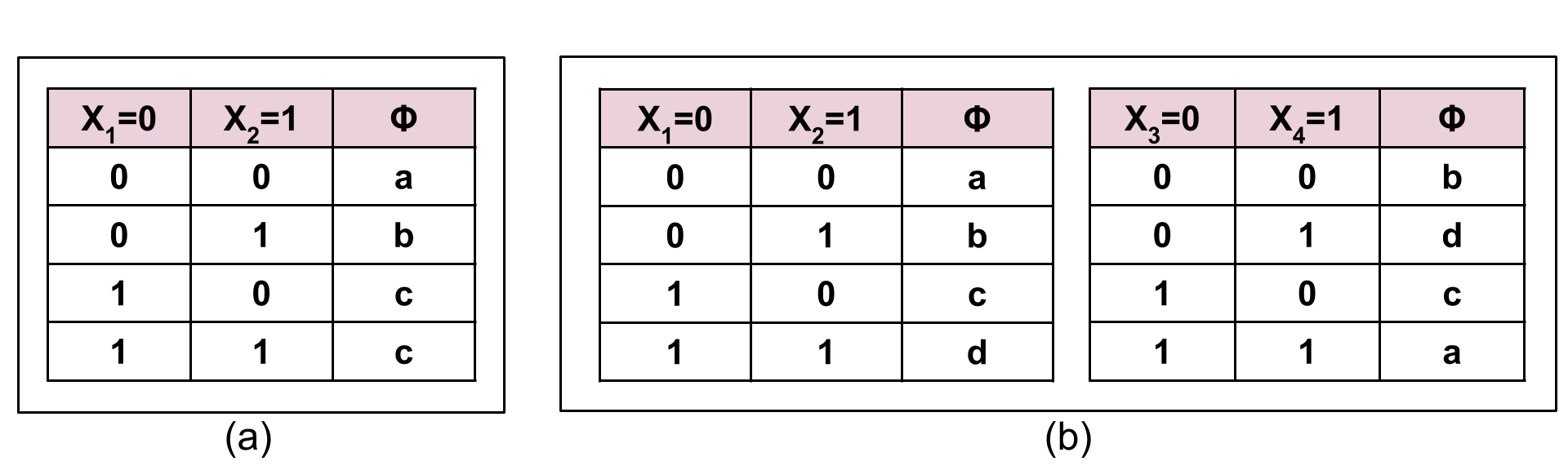}}
\caption{ Block-Value Symmetries {\bf (a)} BV Symmetries within a block {\bf (b)} BV Symmetries across blocks }
\label{fig:ex}
\vspace*{-2ex}
\label{fig:egs}
\end{figure*}
A \emph{lifted inference} algorithm for probabilistic graphical models (PGMs) performs inference on a smaller model, which is constructed by merging together states (or variables) of the original model \cite{poole03,braz&al05,kimmig&al15}. Two main kinds of lifted inference algorithms exist: those where lifting is tied to an existing inference procedure such as belief propagation \cite{singla&domingos08,kersting&al09}, Gibbs sampling \cite{venugopal&gogate12}, weighted model counting \cite{gogate&domingos11}, variational inference \cite {bui&al13} and linear programming \cite{mladenov&al12}; and those that merge symmetric states/variables independent of the procedure \cite{niepert12,broeck&niepert15,anand&al16}.

One approach for generating symmetries is by computing isomorphism over a graphical representation of the PGM. This merges symmetric states into a single cluster (orbit), which is compactly represented as permutations over a polynomial  representation. Permutations over variables \cite{niepert12} and over variable-value (VV) pairs \cite{anand&al17} 
have been studied, with latter being a generalization of the former, capturing many more state symmetries. While more general, VV permutations clearly do not capture all possible state symmetries in a domain. For example, state $s_1=(0,0,0,0)$ is symmetric to $s_2=(0,1,1,1)$ in Figure \ref{fig:egs}(b), but VV permutations cannot represent it.

A natural question arises: are there more general representations which can capture (a subset of) these larger set of symmetries? We note that the problem of computing all possible symmetries is intractable since there is an exponential number of permutations over an exponentially large state space, each of which could be a symmetry (or not).
 Nevertheless, we hope there are representations which can capture additional symmetries compared to current approaches in bounded polynomial time. More so, it would be interesting to come up with a representation that enables computation of larger and larger sets of symmetries, while paying additional costs, which could be controlled as a function of a parameter of the representation.

As a significant step toward this research question, we develop the novel notion of symmetries defined over \emph{block-value (BV) pairs}. Here, a block is a set of variables, and its value is an assignment to these variables. Intuitively, BV pairs can capture all such
VV pairs that are not permuted independently, instead, are permuted in subsets together. For example, it can capture symmetry of states $s_1$ and $s_2$ via a BV permutation which maps
$\{(X_1,0),(X_2,0)\}\leftrightarrow \{(X_3,1),(X_4,1)\}$ and $\{(X_1,0),(X_2,1)\}\leftrightarrow \{(X_3,0),(X_4,0)\}$. 

Clearly, symmetries defined over BV pairs are a strict generalization of those over VV pairs, since each VV pair is a BV pair with a block of size 1. Our blocks can be of varying sizes and the size of each block essentially controls the set of symmetries that can be captured; larger the blocks, more the symmetries, coming at an additional cost (exponential in the max size of a block). 

In this paper, we formally develop the notion of symmetries as permutations defined over a subset of BV pairs. Some of these permutations will be invalid (when blocks overlap with each other) and their application may lead to inconsistent state. In order to ensure valid permutations, we require that the blocks come from a disjoint set of blocks, referred to as a {\em block partition}. Given a block partition, we show how to compute the corresponding set of symmetries by reducing the problem to one of graph isomorphism. We also show that our BV symmetries can be thought of as VV symmetries, albeit over a transformed graphical model, where the new variables represent the blocks in the original graph. 

Next, we show that jointly considering symmetries obtained from different block partitions can result in capturing symmetries not obtainable from any single one. Since, there is an exponential number of such block partitions, we provide an efficient heuristic 
for obtaining a promising partition of blocks, referred to as a {\em candidate set}. 

Use of BV symmetries in an MCMC framework requires uniform sampling of a state from each orbit, i.e., a set of symmetric states. This turns out to be a non-trivial task when the orbits are defined over symmetries corresponding to different block partitions. In response, we design an aggregate Markov chain which samples from orbits corresponding to each (individual) candidate set in turn. We prove that our aggregate Markov chain converges to the desired distribution. As a proof of the utility of our BV symmetries, we show that their usage results in significantly faster mixing times on two different domains.  

The outline of this paper is as follows. We start with some background on variable and VV symmetries in Section \ref{sec:2}. This is followed by the exposition of our symmeteries defined over BV pairs (Section \ref{sec:3}). Section \ref{sec:mcmc} describes our algorithm for using BV symmetries in MCMC. This is followed by our heuristic to compute promising candidate sets in Section \ref{sec:5}. We present our experimental evaluation (Section \ref{sec:6}) and conclude the paper with directions for future work.

\section{BACKGROUND}
\label{sec:2}
Let $\X$ $=\{X_{1},X_{2},\hdots,X_{n}\}$ denote a set of discrete valued random variables. We will use the symbol $x_i$ to denote the value taken by the variable $X_i$. We will assume that each of the variables comes from the same domain $\mathcal{D}$. 
A state $s \in \mathcal{D}^n$ is an assignment to all the variables in the set $\mathcal{X}$. Further, $s(X_i)=x_i$ gives the value of variable $X_i$ in state $s$. We will use $\mathcal{S}$ to denote the set of all possible states. 

A Graphical Model~\cite{koller&friedman09} is a set of pairs $\{(f_j,w_j)\}_{j=1}^{m}$ where $f_j$ is a feature function defined over the variables in the set $\mathcal{X}$ and $w_j$ is its associated weight. 
\begin{defn} Action of $\theta$ on $\mathcal{G}$ results in a new graphical model where the occurrence of $X_i$ in each feature $f_j$ in $\mathcal{G}$ is replaced by $\theta(X_i)$. Given a graphical model $\mathcal{G}$, a permutation $\theta$ of the variables in ${\mathcal X}$ is said to be a {\bf variable symmetry} of ${\mathcal{G}}$ if the action of $\theta$ on ${\mathcal{G}}$ results back in ${\mathcal{G}}$.

\end{defn}
Given a state $s \in \mathcal{S}$, the action of $\theta$ on $s$, denoted by $\theta(s)$, results in a new state $s'$ such that $\forall X_i,X_j \in {\mathcal{X}}$ if 
$\theta(X_i)=X_j$ and
$s(X_j)=x_j$ then $s'(X_i)=x_j$.

The set of all variable symmetries forms a group called {\em the variable automorphic group} of $\G$ and is denoted by $\Theta$. $\Theta$ partitions the states into equivalence classes or orbits which are as defined below. 
{\bf
\begin{defn}
Given a variable automorphic group $\Theta$, the {\bf orbit} of a state $s$ under the effect of $\Theta$ is defined as $\Gamma_{\Theta}(s) =\{\theta(s)|\theta \in \Theta \}$.
\end{defn}
}

Intuitively, the orbit of a state $s$ is set of all states reachable from $s$ under the action of any permutation in the automorphic group.

We note that variable symmetries are probability preserving transformations \cite{niepert12}. Let $\mathcal{P}$ denote the distribution defined by a graphical model $\mathcal{G}$ where $\mathcal{P}(s)$ is the probability of a state $s$.

\begin{thm}
If $\Theta$ is a variable automorphic group of $\G$, then $\forall s \in \mathcal{S}$, $\forall \theta \in \Theta$, $\mathcal{P}(s)= \mathcal{P}(\theta(s))$.
\end{thm}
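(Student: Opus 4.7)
The plan is to unfold the definition of the distribution induced by a graphical model and then use the defining property of a variable symmetry to match up terms in the unnormalized log-probability.

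First I would write the distribution explicitly as
\begin{equation*}
\mathcal{P}(s) \;=\; \frac{1}{Z}\exp\!\Big(\sum_{j=1}^{m} w_j\, f_j(s)\Big),
\qquad
Z \;=\; \sum_{s' \in \mathcal{S}} \exp\!\Big(\sum_{j=1}^{m} w_j\, f_j(s')\Big),
\end{equation*}
so that proving $\mathcal{P}(s)=\mathcal{P}(\theta(s))$ reduces to showing $\sum_j w_j f_j(\theta(s)) = \sum_j w_j f_j(s)$ (the partition function $Z$ is a constant that does not depend on the state).

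Next I would establish the key pointwise identity: for any feature $f_j$ and any state $s$,
\begin{equation*}
f_j(\theta(s)) \;=\; f_j^{\theta}(s),
\end{equation*}
where $f_j^{\theta}$ denotes the feature obtained by replacing every variable $X_i$ occurring in $f_j$ by $\theta(X_i)$. This is essentially a relabeling argument: evaluating a feature on a state in which the value of each $X_i$ has been shifted to $\theta(X_i)^{-1}$'s slot is the same as evaluating the relabeled feature on the original state. I would verify this by carefully tracing through the definition of $\theta(s)$ given in the excerpt, namely that $s'(X_i) = s(\theta(X_i))$.

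Then I would invoke the hypothesis that $\theta$ is a variable symmetry of $\mathcal{G}$. By definition, the action of $\theta$ on $\mathcal{G}$ returns $\mathcal{G}$, which means the multiset $\{(f_j^\theta, w_j)\}_{j=1}^m$ equals the multiset $\{(f_j, w_j)\}_{j=1}^m$. Therefore there is a permutation $\pi$ of $\{1,\dots,m\}$ with $f_j^\theta = f_{\pi(j)}$ and $w_j = w_{\pi(j)}$, and so
\begin{equation*}
\sum_{j=1}^{m} w_j\, f_j(\theta(s))
\;=\; \sum_{j=1}^{m} w_j\, f_j^\theta(s)
\;=\; \sum_{j=1}^{m} w_{\pi(j)}\, f_{\pi(j)}(s)
\;=\; \sum_{j=1}^{m} w_j\, f_j(s).
\end{equation*}
Dividing through by $Z$ yields $\mathcal{P}(\theta(s)) = \mathcal{P}(s)$.

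The only mildly delicate step is the pointwise identity $f_j(\theta(s)) = f_j^\theta(s)$; it is a routine change-of-variables but one has to be careful about whether $\theta$ or $\theta^{-1}$ appears, given the convention $s'(X_i) = s(\theta(X_i))$ implicit in the excerpt. Everything else is bookkeeping, so I do not anticipate any genuine obstacle.
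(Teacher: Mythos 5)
Your proof is correct, and it is the standard argument; the paper itself states this theorem without proof, attributing it to Niepert (2012), so there is no in-paper proof to diverge from. The one step you flag as delicate resolves cleanly under the paper's conventions: since the action on states satisfies $\theta(s)(X_i)=s(\theta(X_i))$ and the action on $\mathcal{G}$ replaces $X_i$ by $\theta(X_i)$ in each feature, the identity $f_j(\theta(s))=f_j^{\theta}(s)$ holds with $\theta$ itself and no inverse is needed; the rest of your bookkeeping (multiset equality of $\{(f_j^{\theta},w_j)\}$ and $\{(f_j,w_j)\}$, reindexing by the induced permutation $\pi$, and cancelling the state-independent $Z$) is sound.
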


Anand et al. \shortcite{anand&al17} 
extend the notion of variable symmetries to those defined over variable value (VV) pairs. Let $(X_i,x_i)$ denote a VV pair and let $\mathcal{X}_V$ denote the set of all possible such pairs. Let $\phi$ denote a permutation over the set $\mathcal{X}_V$. {\em Action of $\phi$ on state} $s$, denoted by $\phi(s)${\bf ,} results in a state $s'$, such that $\forall$ $X_i,X_j \in \mathcal{X}$, if $\phi(X_i,s(X_i))=(X_j,x_j)$, then $s'(X_j)=x_j$.

There are some VV permutations which when applied to a state $s$ may result in an inconsistent state. For instance, let $\phi(X_0,0) = (X_0,0)$ , $\phi(X_1,1) = (X_0,1)$ and $s=(0,1)$, then $\phi(s)$ results in an inconsistent state with multiple values being assigned to $X_0$. Therefore, the notion of valid VV permutation needs to be defined which when applied to any state $s \in \mathcal{S}$ always results in a consistent state $s'$ \cite{anand&al17}.

\begin{defn}
A VV permutation $\phi$ over $\X_{V}$ is said to be a {\bf valid VV permutation} if whenever there exists a VV pair $(X_i,x_i)$ such that $\phi(X_i,x_i) = (X_j,x_j)$, then for all the VV pairs of the form $(X_i,x_{i}')$ where $x_{i}' \in \mathcal{D}_i$,  $\phi(X_i,x_i) = (X_j,x_{j}')$ where $x_j' \in \mathcal{D}_j$.
\end{defn}

\begin{defn}
Action of $\phi$ on $\mathcal{G}$ results in a new graphical model where the occurrence of $(X_i,x_i)$ in each feature $f_j$ in $\mathcal{G}$ is replaced by $\phi(X_i,x_i)$.
We say that $\phi$ is a  {\bf VV symmetry} of ${\mathcal{G}}$, if action of $\phi$ on ${\mathcal{G}}$ results back in ${\mathcal{G}}$. 
\end{defn}




Similar to variable symmetries, the set of all VV symmetries form a group called the VV automorphic group of $\mathcal{G}$ and is denoted by $\Phi$.
Analogously,  $\Phi$ partitions the states into orbits defined as $\Gamma_{\Phi}(s)=\{\phi(s) \vert \forall \phi \in \Phi\}$.


In the following, we will often refer to the automorphic groups $\Theta$ and $\Phi$ as symmetry groups of $\mathcal{G}$.
It can be easily seen that VV symmetries subsume variable symmetries and like variable symmetries, they are also probability preserving transformations. 
\begin{thm}
If $\Phi$ is a VV automorphic group of $\G$, then $\forall s \in \mathcal{S}$, $\forall \phi \in \Phi$, $\mathcal{P}(s)= \mathcal{P}(\phi(s))$
\end{thm}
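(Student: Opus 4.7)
The plan is to mirror the standard argument used for variable symmetries (the earlier theorem in the paper), lifting it to the VV setting. Write the probability induced by $\mathcal{G}$ in log-linear form, $\mathcal{P}(s) = \tfrac{1}{Z}\exp\bigl(\sum_j w_j f_j(s)\bigr)$. Since $Z$ only depends on the graphical model $\mathcal{G}$ itself and not on the state, it suffices to establish that the weighted feature sum is invariant under $\phi$, i.e.\ $\sum_j w_j f_j(s) = \sum_j w_j f_j(\phi(s))$.

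The key lemma I would isolate is a pointwise equivariance statement: for every feature $f_j$ and every state $s$,
\[
f_j(s) \;=\; (\phi\cdot f_j)\bigl(\phi(s)\bigr),
\]
where $\phi\cdot f_j$ is the feature obtained by replacing each occurrence of a VV pair $(X_i,x_i)$ in $f_j$ by $\phi(X_i,x_i)$. I would prove this by unpacking the definitions: a feature's value on a state is determined by which VV assertions $(X_i,x_i)$ it tests and whether those hold in the state. If $\phi(X_i,x_i)=(X_{i'},x_{i'})$, then by the validity of $\phi$ together with the definition of the action of $\phi$ on $s$, the equivalence $s(X_i)=x_i \Longleftrightarrow \phi(s)(X_{i'})=x_{i'}$ holds. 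Since every VV assertion in $f_j$ gets replaced by its $\phi$-image in $\phi\cdot f_j$, the truth-values (and hence the outputs) coincide.

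Now invoke the hypothesis that $\phi$ is a VV symmetry of $\mathcal{G}$: by definition, the action of $\phi$ on $\mathcal{G}$ yields $\mathcal{G}$ back, meaning the multiset $\{(\phi\cdot f_j, w_j)\}_j$ equals $\{(f_j, w_j)\}_j$. Therefore there is a permutation $\pi$ of the feature indices such that $\phi\cdot f_j = f_{\pi(j)}$ and $w_j = w_{\pi(j)}$. Combining this with the lemma gives
\[
\sum_j w_j f_j(s) \;=\; \sum_j w_j (\phi\cdot f_j)(\phi(s)) \;=\; \sum_j w_{\pi(j)} f_{\pi(j)}(\phi(s)) \;=\; \sum_j w_j f_j(\phi(s)),
\]
and therefore $\mathcal{P}(s)=\mathcal{P}(\phi(s))$.

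The main obstacle, and the part worth stating carefully, is the equivariance lemma above, since it is the place where the \emph{validity} of $\phi$ is essential: without validity, $\phi(s)$ might not even be a well-defined state, and the chain of equivalences between VV assertions on $s$ and VV assertions on $\phi(s)$ would break. Once that is handled cleanly, the rest is essentially a relabeling argument exactly parallel to the proof for variable symmetries.
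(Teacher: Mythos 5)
Your argument is correct: the equivariance lemma $f_j(s)=(\phi\cdot f_j)(\phi(s))$ (where validity of $\phi$ guarantees both that $\phi(s)$ is a well-defined state and that the biconditional $s(X_i)=x_i \Leftrightarrow \phi(s)(X_{i'})=x_{i'}$ holds), combined with the multiset identity $\{(\phi\cdot f_j,w_j)\}_j=\{(f_j,w_j)\}_j$ from the definition of a VV symmetry, gives invariance of the weighted feature sum and hence of $\mathcal{P}$, since $Z$ is state-independent. The paper itself states this theorem without proof, deferring to Anand et al.\ (2017), and your log-linear relabeling argument is exactly the standard proof one would supply, so there is no substantive difference to report.
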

The orbits so obtained through variable (VV) symmetries can then be exploited for faster mixing by Markov Chain Monte Carlo (MCMC) based methods as described below. 
\subsection{Orbital-MCMC}
Markov Chain Monte Carlo (MCMC) methods \cite{koller&friedman09} are one of the popular algorithms for approximate inference in Probabilistic Graphical Models. Starting with a random state, these methods set up a Markov chain over the state space whose stationary distribution is same as the desired distribution. Convergence is guaranteed in the limit of a large number of samples coming from the Markov chain.

Orbital MCMC and VV-MCMC improve MCMC methods by exploiting Variable and VV symmetries{\bf,} respectively. Given a Markov chain $\mathcal{M}$ and a symmetry group $\Phi$, starting from a sample $s_t$, any subsequent sample is obtained in 2 steps: a) An intermediate state $s'$ is obtained according to $\mathcal{M}$ b) The next sample $s_{t+1}$ is obtained by sampling a state uniformly from the orbit (Variable or VV) of the intermediate state $s'$. Sampling a state from the orbit of the intermediate state is done using the Product Replacement Algorithm \cite{celler1995,pak00}. This two step chain so obtained converges to the true stationary distribution and has been shown to have better mixing both theoretically \cite{niepert12} and empirically \cite{niepert12,anand&al17}. The key insight exploited by these algorithms is the fact that all the states in any given orbit have the same probability. 
\section{BLOCK-VALUE SYMMETRIES}
\label{sec:3}

In this section, we will present symmetries defined over blocks of variables, referred to as {\em BV Symmetries} which strictly generalize the earlier notions of symmetries defined over VV pairs. As a motivating example, Figure \ref{fig:ex} shows two Graphical Models $\G_1$ and $\G_2$. For ease of explanation these have been represented in terms of potential tables. These can easily be converted to the weighted feature representation, as defined previously. In $\G_1$, state $(1,0)$ has the same joint probability as $(1,1)$ and in $\G_2$, state $(0,0,0,0)$ has the same joint probability as $(0,1,1,1)$. However, none of these can be captured by Variable or VV symmetries.
We start with some definitions.
\begin{defn}
Let $B=\{X_1,X_2,\hdots,X_r\}$ denote a set of variables ($X_i \in \mathcal{X}$) which we will refer to as a {\bf block}. Similarly, let $b=\{x_1,x_2,\hdots,x_r\}$ denote a set of (corresponding) assignments to the variables in the block $B$. Then, we refer to the pair $(B,b)$ as a {\bf Block-Value (BV) pair}.
\end{defn}

\begin{defn}
A BV pair $(B,b)$ is said to be {\bf consistent} with a state s if $\forall X_i \in B$, $s(X_i) = x_i$ where $x_i$ is the value for variable $X_i$ in block $B$.
\end{defn}

Let ${\Delta}_{V}^{r}$ denote some subset of all possible BV pairs defined over blocks of size less than equal to $r$. For ease of notation, we will drop superscript r and denote ${\Delta}_{V}^{r}$ as $\Delta_{V}$ where r is a pre-specified constant for maximum block size. Then, we are interested in defining permutations over the elements of the set ${\Delta_V}$. Considering any set of block-value pairs in ${\Delta_V}$ and allowing permutation among them may lead to inconsistent states. Consider a graphical model defined over four variables: $\{X_1,X_2,X_3,X_4\}$. Let us consider all possible blocks of size $\leq$ 2. Then, a BV permutation permuting the singleton block $\{X_1\}$ to itself (with identity mapping on values) while at the same time, permuting the block $\{X_1,X_3\}$ to the block $\{X_2,X_4\}$ is clearly inconsistent since $X_1$'s value can not be determined uniquely. 
A natural way to avoid this inconsistency is to restrict each variable to be a part of single block while applying permutations.
Therefore, we restrict our attention to sets of blocks which are non overlapping.


\begin{defn}
Let $\Delta = \{B_1, B_2, \hdots, B_L \}$ denote a set of blocks. We define $\Delta$ to be a {\bf partition} if each variable $X_i \in \mathcal{X}$ appears in exactly one block in $\Delta$. For a partition $\Delta$, we define the {\bf block value set} $\Delta_V$ as a set of BV pairs where each block $B_l \in \Delta$ is present with all of its possible assignments. 
\end{defn}
We would now like to define permutations over the block value set $\Delta_V$, which we refer to as {\em BV-permutations}. To begin, we define the action of a BV-permutation $\psi: \Delta_V \rightarrow \Delta_V$ on a state s. The action of a BV-permutation $\psi: \Delta_V \rightarrow \Delta_V$ on a state $s$ results in a state $s'= \psi(s)$ such that $\forall (B,b) \in \Delta_V$, $(B,b)$ is consistent with $s$ if and only if $\psi(B,b)$ is consistent with $s'$

However, similar to the case of VV symmetries, any bijection from $\Delta_V \rightarrow \Delta_V$ may not always result in a consistent state. For instance, consider a graphical model with 4 variables. Let the partition $\Delta = \{(X_1, X_2), (X_3, X_4) \}$. Consider the state $s = (0, 1, 1, 0)$. In case $\psi$ is defined as $\psi(\{X_1,X_2\}, \{0,1\})$ = $(\{X_1, X_2\}, \{1,0\})$ and $\psi(\{X_3,X_4\}, \{1,0\})$ = $(\{X_1, X_2\}, \{1,1\})$, the action of $\psi$ results in an inconsistent state, since the action of $\psi$ would result in a state with $X_2$ equal to both 0 and 1 simultaneously. To address this issue, we define a BV-permutation to be valid only under certain conditions.

\begin{defn}
A BV-permutation $\psi :\Delta_V \rightarrow \Delta_V$ is said to be {\bf valid} if $\forall (B_i,b_i) \in \Delta_V,\psi(B_i, b_i) = (B_j, b_j) \Rightarrow \forall b'_i, \exists b_j'$ such that $\psi(B_i, b_i') = (B_j, b_j')$
\end{defn}

Intuitively a BV-permutation $\psi$ is valid if it maps all assignments of a block $B$ to assignments of a fixed block $B'$. 


Presently, it is tempting to define a new graphical model where each block is a multi valued variable, with domain of this variable describing all of the possible assignments. This would be useful in a lucid exposition of symmetries. To do this we must suitably transform the set of features as well to this new set of variables. Given a block partition $\Delta$, we transform the set of features $f_j$ such that for each block either all the variables in this block appear in the feature or none of them appear in the feature, while keeping all features logically invariant. We denote the set of all variables over which feature $f_j$ is defined as $\mathcal{V}(f_j)$. Further, for a block $B_l$ and a feature $f_j$, let $\bar{B}_{l} =  B_l - \mathcal{V}(f_j)$ i.e $\bar{B}_{l}$ contains the additional variables in the block which are not part of feature $f_j$. 
\begin{defn}
Given a variable $X_i$, which appears in a block $B_l \in \Delta$ and a feature $f_j$, {\bf a block consistent} representation of the feature, denoted by $f_j'$, is defined over the variables $\mathcal{V}(f_j) \cup \bar{B}_l$, such that, $f_j'({\bf x}_j, \bar{b_l}) = f_j({\bf x}_j)$ where ${\bf x}_j$, $\bar{b_l}$ denote an assignment to all the variables in $\mathcal{V}(f_j)$ and $\bar{B}_{l}$, respectively.
\end{defn}
For instance consider the feature $f = (X_2)$. Let the block $B_l$ be $\{(X_1, X_2) \}$. Then the block consistent feature $f'$ is given by $f' = (X_1 \land X_2) \lor (\neg X_1 \land X_2)$.

We extend the idea of block consistent representation to get a partition consistent representation $\hat{f}_j$. 
\begin{defn}
{\bf A partition consistent} representation of a feature $f_j$, $\hat{f}_j$ is defined by iteratively converting the feature $f_j$ to its block consistent representation for each $X_i \in \mathcal{V}(f_j)$. 
\end{defn}
 


The set of partition consistent features $\{(\hat{f_j}, w_j) \}^{m}_{j=1}$ has the property that for all $B_l \in \ptn$, $B_l \subseteq Var(\hat{f_j})$ or $B_l \cap Var(\hat{f_j}) = \phi$, i.e. all variables in each block either appear completely, or do not appear at all in any given feature. This property allows us to define a transformed graphical model $\hat{\mathcal{G}}$ over a set of multi valued variables $\mathcal{Y}$, where each variable $Y_l \in \mathcal{Y}$ represents a block $B_l \in \Delta$. The domain size of $Y_l$ is the number of possible assignments of the variables in the block $B_l$. The set of features in this new model is simply the set of transformed features $\{(\hat{f_j}, w_j)\}^{m}_{j=1}$.
As the blocks are non overlapping, such a transformation can always be carried out. 

Since the transformation of features to partition consistent features always preserves logical equivalence, it seems natural to wonder about the relationship between the graphical models $\G$ and $\hat{\G}$. We first note that each state $s$ in $\G$ can be mapped to a unique state $\hat{s}$ in $\hat{\G}$ by simply iterating over all the blocks $B_l \in \Delta$, checking which BV pair $(B_l, b_l)$ is consistent with the state $s$ and assigning the appropriate value $y_l$ to the corresponding variable $Y_l$. In a similar manner, each state $\hat{s} \in \hat{\G}$ can be mapped to a unique state in $s \in \G$.

\begin{thm}
Let $s$ denote a state in $\G$ and let $\hat{s}$ be the corresponding state in $\hat{\G}$. Then, this correspondence is probability preserving i.e., $\mathcal{P}(s) = \hat{\mathcal{P}}(\hat{s})$ where $\mathcal{P}$ and $\hat{\mathcal{P}}$ are the distributions defined by $\G$ and $\hat{\G}$, respectively. 
\end{thm}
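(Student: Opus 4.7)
The plan is to verify the claim in two stages: first, establish that each weighted feature in $\G$ contributes the same value as its partition-consistent counterpart in $\hat{\G}$ under the state correspondence, and second, deduce that the partition functions agree, from which normalized probabilities must be equal.

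For the first stage, I would argue by induction on the number of block-consistency transformations used in building $\hat{f}_j$ from $f_j$. The base case is $f_j$ itself, trivially invariant. For the inductive step, suppose the current intermediate feature $g$ satisfies $g(s) = f_j(s)$ for every extension $s$ consistent with the variables $g$ currently mentions. When we replace $g$ by its block-consistent form $g'$ for some $X_i \in \mathcal{V}(g)$ lying in block $B_l$, the definition $g'(\mathbf{x}, \bar{b}_l) = g(\mathbf{x})$ ensures that $g'$ is constant in $\bar{b}_l$ and agrees with $g$ on the shared variables. Since the blocks in $\Delta$ are disjoint, each transformation augments the feature's variable set only with \emph{fresh} variables (those in some $\bar{B}_l$ that were not already present), so the transformations commute and the final result $\hat{f}_j$ is well-defined. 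Evaluating $\hat{f}_j$ at $\hat{s}$ means plugging in block values $y_l = b_l$ where $b_l$ is the assignment to $B_l$ inside $s$; by the chain of equalities above, this equals $f_j(s)$.

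For the second stage, recall that $\mathcal{P}(s) \propto \exp\!\bigl(\sum_j w_j f_j(s)\bigr)$ and $\hat{\mathcal{P}}(\hat s) \propto \exp\!\bigl(\sum_j w_j \hat{f}_j(\hat s)\bigr)$. By stage one, the unnormalized weights coincide under the bijection $s \leftrightarrow \hat{s}$:
\[
\exp\!\Bigl(\sum_j w_j f_j(s)\Bigr) \;=\; \exp\!\Bigl(\sum_j w_j \hat{f}_j(\hat{s})\Bigr).
\]
Because the correspondence described just before the theorem is a bijection between $\mathcal{S}$ and the state space of $\hat{\G}$, summing the two sides over all states of $\G$ and all states of $\hat{\G}$ respectively yields equal totals, i.e.\ $Z = \hat{Z}$. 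Dividing the pointwise equality by this common normalizer gives $\mathcal{P}(s) = \hat{\mathcal{P}}(\hat{s})$, as required.

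The main obstacle I anticipate is the careful handling of the iterated block-consistent transformation: one must argue that iterating over different variables $X_i \in \mathcal{V}(f_j)$ (possibly belonging to different blocks) yields a well-defined $\hat{f}_j$ independent of the order in which blocks are processed, and that after all iterations every block is either entirely inside $\mathrm{Var}(\hat{f}_j)$ or disjoint from it. Disjointness of $\Delta$ is the key structural fact that makes this clean; once it is in place the rest of the argument is a direct logical-equivalence computation followed by summing over the bijection.
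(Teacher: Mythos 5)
Your argument is correct and matches the reasoning the paper relies on: the paper states this theorem without an explicit proof, justifying it only by the remark that the block/partition-consistent transformation preserves logical equivalence of each feature and that the state correspondence is a bijection, which is precisely what your two stages establish (feature-value invariance under the transformation, then equality of unnormalized weights and partition functions). Your additional care about order-independence of the iterated transformation, guaranteed by disjointness of the blocks in $\Delta$, is a sound elaboration of a point the paper leaves implicit.
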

Similar to the mapping between states, every BV-permutation $\psi$ of $\G$ corresponds to an equivalent VV-permutation $\hat{\phi}$ of $\hat{G}$ obtained by replacing each BV pair in $\G$ by the corresponding VV pair in $\hat{\G}$ (and vice-versa). Since the distributions defined by the two graphical models are equivalent, we can define BV symmetries in $\G$ as follows:
\begin{defn}
Under a given partition $\Delta$, a BV-permutation $\psi$ of a graphical model $\G$ is a {\bf BV-symmetry} of $\G$ if the corresponding permutation $\hat{\phi}$ under $\hat{G}$ is a VV-symmetry of $\hat{\G}$.
\end{defn}
We can now state the following results for BV-symmetries.
\begin{thm}
BV-symmetries are probability preserving transformations, i.e.,  for a BV-symmetry $\psi$, $\mathcal{P}(s) = \mathcal{P}(\psi(s))$ for all states $s \in \mathcal{S}$.
\end{thm}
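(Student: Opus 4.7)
The plan is to reduce this result to the corresponding fact for VV-symmetries, using the transformed model $\hat{\G}$ as a bridge. The two ingredients already in hand are (i) the preceding theorem, which gives a probability-preserving bijection $s \mapsto \hat{s}$ between states of $\G$ and states of $\hat{\G}$, so that $\mathcal{P}(s) = \hat{\mathcal{P}}(\hat{s})$; and (ii) the VV-symmetry theorem from Section 2, which guarantees that $\hat{\mathcal{P}}(\hat{s}) = \hat{\mathcal{P}}(\hat{\phi}(\hat{s}))$ for any VV-symmetry $\hat{\phi}$ of $\hat{\G}$. Combined with the \emph{definition} of a BV-symmetry, which asserts that $\psi$ is a BV-symmetry of $\G$ exactly when the associated $\hat{\phi}$ is a VV-symmetry of $\hat{\G}$, we are nearly done once a naturality/compatibility claim is verified.

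The one step that requires actual argument is the commutation property
\[
\widehat{\psi(s)} \;=\; \hat{\phi}(\hat{s}),
\]
i.e., that applying $\psi$ in $\G$ and then translating to $\hat{\G}$ yields the same state as translating first and then applying $\hat{\phi}$. I would prove this by unfolding both sides against the defining action of $\psi$ on a state, which says that $(B,b)$ is consistent with $s$ iff $\psi(B,b)$ is consistent with $\psi(s)$. Under the state correspondence, $(B,b)$ is consistent with $s$ iff the block variable $Y_{B}$ takes value $b$ in $\hat{s}$, which is precisely the VV pair $(Y_{B}, b)$. Since $\hat{\phi}$ is obtained from $\psi$ by literally renaming each BV pair to its corresponding VV pair, the action of $\hat{\phi}$ on $\hat{s}$ places $\psi(B,b)$'s corresponding VV pair in $\hat{\phi}(\hat{s})$ iff $(B,b)$ was consistent with $s$. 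This exactly matches the block-by-block construction of $\widehat{\psi(s)}$.

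Once the commutation identity is established, the proof is a three-line chain:
\[
\mathcal{P}(s) \;=\; \hat{\mathcal{P}}(\hat{s}) \;=\; \hat{\mathcal{P}}(\hat{\phi}(\hat{s})) \;=\; \hat{\mathcal{P}}(\widehat{\psi(s)}) \;=\; \mathcal{P}(\psi(s)),
\]
where the first and last equalities use the probability-preserving state correspondence, the second uses that $\hat{\phi}$ is a VV-symmetry of $\hat{\G}$, and the third is the commutation identity.

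The main obstacle I anticipate is purely notational bookkeeping around the commutation lemma: ensuring that the ``replace each BV pair with the corresponding VV pair'' correspondence between $\psi$ and $\hat{\phi}$ is well-defined (uses validity of $\psi$ to guarantee whole-block mapping, which is exactly what the transformed model requires) and that the block-consistent feature rewriting does not alter the joint distribution, so that $\hat{\mathcal{P}}$ really is the image of $\mathcal{P}$. Both facts are already implicitly used by the preceding theorem, so I would simply cite that theorem and spell out the pair-by-pair correspondence on states.
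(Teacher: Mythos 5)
Your proof is correct and follows exactly the route the paper intends: the theorem is stated as an immediate consequence of the definition of BV-symmetry via the transformed model $\hat{\G}$, combined with the probability-preserving state correspondence $\mathcal{P}(s)=\hat{\mathcal{P}}(\hat{s})$ and the fact that VV-symmetries preserve probability. Your explicit verification of the commutation identity $\widehat{\psi(s)}=\hat{\phi}(\hat{s})$ fills in a step the paper leaves implicit, but it is the same argument, not a different one.
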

It is easy to that the set of all BV symmetries under a given partition $\Delta$ form a group $\Psi$.
Similar to the VV orbits, we define the BV orbit of a state $s$ as $\Gamma_{\Psi}(s)=\{\psi(s) \vert \psi \in \Psi\}$. 

When the partition $\Delta$ is such that each variable appears in a block by itself, all the BV-symmetries are nothing but VV-symmetries.
\begin{thm}
Any VV-symmetry can be represented as a BV-symmetry for an appropriate choice of $\Delta$.
\end{thm}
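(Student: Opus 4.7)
The plan is to exhibit a specific partition $\Delta$ under which the BV-symmetry framework specializes exactly to the VV-symmetry framework, and then to verify that, under this choice, any VV-symmetry $\phi$ of $\G$ transfers to a BV-symmetry $\psi$ via the natural identification of singleton blocks with individual variables.

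Concretely, I would take $\Delta = \{\{X_1\}, \{X_2\}, \ldots, \{X_n\}\}$, the partition in which each block is a single variable. First I would observe that with this partition, every BV pair $(B_l, b_l)$ is of the form $(\{X_l\}, \{x_l\})$, and thus is in one-to-one correspondence with the VV pair $(X_l, x_l)$. Hence $\Delta_V$ is in natural bijection with $\mathcal{X}_V$, and any VV-permutation $\phi$ of $\G$ can be lifted to a BV-permutation $\psi$ on $\Delta_V$ by setting $\psi(\{X_i\}, \{x_i\}) = (\{X_j\}, \{x_j\})$ whenever $\phi(X_i, x_i) = (X_j, x_j)$. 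The validity condition for BV-permutations (mapping all value assignments of a block to all value assignments of a fixed block) reduces in this case precisely to the validity condition for VV-permutations, since each block has only one variable and thus domain $\mathcal{D}$; so $\psi$ is valid iff $\phi$ is valid.

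Next I would turn to the transformed model $\hat{\G}$. Since every block contains exactly one variable, each feature $f_j$ already has the property that for every block $B_l$, either $B_l \subseteq \mathcal{V}(f_j)$ or $B_l \cap \mathcal{V}(f_j) = \emptyset$. Hence the partition-consistent feature $\hat{f}_j$ coincides with $f_j$, and the transformed graphical model $\hat{\G}$ is identical to $\G$ up to the renaming $Y_l \leftrightarrow X_l$ (with $Y_l$ taking values in $\mathcal{D}$, the domain of $X_l$). Under this renaming, the BV-permutation $\psi$ of $\G$ is taken, by the construction in the paper, to a VV-permutation $\hat{\phi}$ of $\hat{\G}$ which is just $\phi$ itself.

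Finally, I would invoke the definition of BV-symmetry: $\psi$ is a BV-symmetry of $\G$ iff $\hat{\phi}$ is a VV-symmetry of $\hat{\G}$. Since $\hat{\phi} = \phi$ and $\hat{\G} = \G$ (up to renaming), this holds by hypothesis. Thus $\psi$ is the desired BV-symmetry, completing the proof. There is no real obstacle here; the argument is essentially definitional, and the main thing to be careful about is making the three identifications (BV pairs with VV pairs, $\hat{\G}$ with $\G$, and $\psi$ with $\phi$) explicit and checking that validity is preserved under them.
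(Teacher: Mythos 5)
Your proposal is correct and follows exactly the route the paper intends: the paper's own (one-line) justification is precisely to take $\Delta$ to be the partition into singleton blocks, under which BV pairs, the validity condition, and the transformed model $\hat{\G}$ all collapse to their VV counterparts. Your write-up simply makes those three identifications explicit, which is a faithful elaboration of the paper's argument rather than a different one.
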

{\textbf{Computing BV Symmetries}}

Since BV symmetry on a graphical model $\G$ is defined in terms of VV symmetry of a transformed graphical model $\hat{\G}$, BV symmetry can be trivially computed by constructing the transformed graphical model and then computing VV symmetry on $\hat{\G}$ as described by Anand et al. \shortcite{anand&al17}.
\section{AGGREGATE ORBITAL MARKOV CHAINS} \label{sec:mcmc}

Given a block partition $\Delta$, BV symmetry group $\Psi$ of $\G$ can be found by computing VV symmetry group $\Phi$ in the auxiliary graphical model $\bf{\hat{\G}}$. We further setup a Markov chain \textit{BV-MCMC($\alpha$)} over $\Psi$ to exploit BV symmetries where $\alpha \in [0,1]$ is a parameter.
\begin{defn}
Given a graphical model $\G$, a Markov chain $\mathcal{M}$ and a BV symmetry group $\Psi$, one can define {\bf a BV-MCMC($\alpha$) Markov chain $\mathcal{M}'$} as follows: From the current sample $s_t$\\
a) Sample a state $s'$ from original Markov chain $\mathcal{M}$ \\
b) i) With probability $\alpha$, sample a state $s_{t+1} = \Gamma_{\Psi}(s')$ uniformly from BV orbit of $s'$ and return $s_{t+1}$ as next sample.\\
\, \, ii) With probability $1 - \alpha$, set state $s_{t+1}=s'$ and return it as the next sample

\end{defn}

BV-MCMC($\alpha$) Markov chain is defined similar to VV-MCMC except that it takes an orbital move only with probability $\alpha$ instead of taking it always. For $\alpha=1$, it is similar to VV-MCMC, and reduces to the original Markov chain $\mathcal{M}$ for $\alpha=0$. When $\alpha=1$, sometimes, it is observed that the gain due to symmetries is overshadowed by the computational overhead of the orbital step. The parameter $\alpha$ captures a compromise between these two contradictory effects. 


\begin{thm}
Given a Graphical Model $\G$, if the original Markov chain $\mathcal{M}$ is regular, then, BV-MCMC($\alpha$) Markov chain $\mathcal{M}'$, constructed as above, is regular and converges to the unique stationary distribution of the original Markov chain $\mathcal{M}$.
\end{thm}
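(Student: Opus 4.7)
The plan is to write the transition kernel of $\mathcal{M}'$ as a mixture and then verify the two standard sufficient conditions separately: (a) regularity of $\mathcal{M}'$ (irreducibility plus aperiodicity) and (b) $\pi$-invariance, where $\pi$ is the unique stationary distribution of $\mathcal{M}$. Concretely, if $P$ is the transition kernel of $\mathcal{M}$ and $O$ is the orbital kernel $O(s',s_{t+1}) = \mathbf{1}[s_{t+1}\in\Gamma_{\Psi}(s')]/|\Gamma_{\Psi}(s')|$, then
\[
P'(s_t,s_{t+1}) \;=\; (1-\alpha)\,P(s_t,s_{t+1}) \;+\; \alpha \sum_{s'} P(s_t,s')\,O(s',s_{t+1}).
\]
This decomposition will drive everything downstream.

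For regularity, I would argue as follows. Since every orbit contains its own representative, $O(s',s') \geq 1/|\Gamma_{\Psi}(s')| > 0$, so the composition $PO$ dominates $P$ up to a strictly positive factor along any path. For $\alpha < 1$ the bound $P'(s,t) \geq (1-\alpha)P(s,t)$ immediately transfers irreducibility and aperiodicity from $\mathcal{M}$ to $\mathcal{M}'$. For $\alpha = 1$, the same conclusion follows because any $\mathcal{M}$-path $s=s_0 \to s_1 \to \cdots \to s_k = t$ is realized in $\mathcal{M}'$ with positive probability by letting the orbital step at each intermediate sample land back on itself; aperiodicity is inherited similarly using a self-looping state of $\mathcal{M}$.

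For $\pi$-invariance I would substitute the kernel above into $\sum_s \pi(s)P'(s,s'')$, use stationarity of $\pi$ under $P$ on each of the two summands to get
\[
\sum_s \pi(s) P'(s,s'') \;=\; (1-\alpha)\pi(s'') + \alpha \sum_{s'\in\Gamma_{\Psi}(s'')} \frac{\pi(s')}{|\Gamma_{\Psi}(s'')|},
\]
and then invoke the earlier theorem that BV symmetries are probability preserving, which gives $\pi(s') = \pi(s'')$ for every $s' \in \Gamma_{\Psi}(s'')$. The orbit sum collapses to $\pi(s'')$ and the whole expression reduces to $\pi(s'')$, so $\pi$ is stationary for $\mathcal{M}'$. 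Combining regularity with $\pi$-invariance yields uniqueness of the stationary distribution by the standard ergodic theorem for finite Markov chains, proving that $\mathcal{M}'$ converges to $\pi$.

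The only delicate point is handling $\alpha = 1$ in the regularity argument, where the clean mixture-domination bound $P' \geq (1-\alpha)P$ degenerates; I would therefore state the regularity argument in the kernel-composition form $P' = PO$ (with $O$ having a positive self-loop at every state) rather than leaning on the mixture form, so both cases of $\alpha$ are covered uniformly. The rest is bookkeeping that follows routinely from the earlier BV-symmetry probability-preservation theorem.
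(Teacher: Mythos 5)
Your proof is correct, and it is essentially the argument the paper relies on: the paper states this theorem without writing out a proof (the analogous claims for orbital MCMC and VV-MCMC are cited from prior work), and its proof of the subsequent aggregate-chain theorem uses exactly the same two-part template you follow --- regularity via a positive probability of self-return in the orbital step, and stationarity via a kernel decomposition combined with the fact that $\pi$ is constant on each orbit. Your mixture decomposition $P' = (1-\alpha)P + \alpha PO$, the handling of the $\alpha = 1$ case through the positive diagonal of $O$, and the collapse of the orbit sum using the probability-preservation theorem are all sound. The one hypothesis worth surfacing explicitly is that the stationary distribution $\pi$ of $\mathcal{M}$ is the model distribution $\mathcal{P}$ of $\mathcal{G}$ (true for the Gibbs chains used here); this is what licenses applying the theorem ``BV-symmetries are probability preserving'' to conclude $\pi(s') = \pi(s'')$ for all $s' \in \Gamma_{\Psi}(s'')$, and the claim would fail for a regular chain whose stationary distribution were not orbit-constant.
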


It should be noted that two different block partitions may capture different BV symmetries and hence may have different BV symmetry groups. In order to fully utilize all symmetries which may be present in multiple block partitions, we propose the idea of \textbf{Aggregate Orbital Markov Chain}.


Consider $K$ different block partitions $\ptn_1, \ptn_2, \hdots, \ptn_K$. We set up $K$ independent BV-MCMC($\alpha$) Markov chains, where each chain generates samples as per BV-MCMC($\alpha$) corresponding to partition $\ptn_k$. Let these chains be $\mathcal{M'}_1, \mathcal{M'}_2, \cdots, \mathcal{M'}_K$, and let the corresponding automorphism groups be $\Psi_{1}, \Psi_{2}, \hdots, \Psi_{K}$. Given an intermediate state $s'$, we would like to sample uniformly from the union of orbits $\bigcup_{k} \Psi_{k}(s')$. Since these orbits may overlap with each other, sampling a state uniformly from the union of orbits is unclear. We circumvent this problem by setting up a new Markov chain, \textbf{Aggregate Orbital Markov Chain}. This Aggregate Orbital Markov Chain utilizes all available symmetries and converges to the true stationary distribution.

\begin{defn}
Given $K$ different BV-MCMC($\alpha$) Markov chains, $\mathcal{M'}_1, \mathcal{M'}_2, \cdots, \mathcal{M'}_K$, an {\bf Aggregate Orbital Markov Chain} $\mathcal{M}^{*}$ can be constructed in the following way: Starting from state $s_t$ a) Sample a BV-MCMC($\alpha$) Markov chain $\mathcal{M'}_k$ uniformly from $\mathcal{M'}_1, \mathcal{M'}_2, \cdots, \mathcal{M'}_K$ b) Sample a state $s_{t+1}$ according to $\mathcal{M'}_k$.  
\end{defn}

\begin{thm}
 The aggregate orbital Markov chain $\mathcal{M}^{*}$ constructed from $K$ BV-MCMC($\alpha$) Markov chains, $\mathcal{M'}_1, \mathcal{M'}_2, \cdots, \mathcal{M'}_K$, all of which have stationary distribution $\pi$,  is regular and converges to the same stationary distribution $\pi$.
\end{thm}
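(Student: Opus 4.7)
The plan is to write the one-step transition kernel of $\mathcal{M}^{*}$ explicitly as a uniform mixture of the kernels $P'_{k}$ of the component chains $\mathcal{M}'_{k}$, namely $P^{*}(s,s') = \frac{1}{K}\sum_{k=1}^{K} P'_{k}(s,s')$, and then establish the two required properties (stationarity with respect to $\pi$, and regularity) using elementary facts about such mixtures.

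First I would verify that $\pi$ is stationary under $P^{*}$. By hypothesis $\pi P'_{k} = \pi$ for every $k$, so by linearity
\[
\pi P^{*} \;=\; \pi\!\left(\tfrac{1}{K}\sum_{k} P'_{k}\right) \;=\; \tfrac{1}{K}\sum_{k} \pi P'_{k} \;=\; \pi.
\]
This is the easy half and requires no more than distributing the sum.

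Next I would handle regularity, which I interpret in the usual sense of irreducibility plus aperiodicity (equivalently, some finite power of $P^{*}$ being entrywise positive). The key observation is the pointwise domination $P^{*}(s,s') \geq \tfrac{1}{K} P'_{1}(s,s')$. By induction this extends to $(P^{*})^{n}(s,s') \geq \tfrac{1}{K^{n}}(P'_{1})^{n}(s,s')$ for every $n \geq 1$. Since $\mathcal{M}'_{1}$ is regular, there exists some finite $N$ for which $(P'_{1})^{N}$ is entrywise positive, and hence $(P^{*})^{N}$ is entrywise positive as well. This gives both irreducibility and aperiodicity of $\mathcal{M}^{*}$ in one stroke, proving that it is regular.

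Finally I would combine the two: a regular finite-state Markov chain has a unique stationary distribution and converges to it from any initial state, so the stationarity established in the first step pins down the limit as $\pi$. I do not anticipate a genuine obstacle here; the only subtlety worth flagging is that one must make sure the appropriate notion of ``regular'' used in the paper matches what the component chains $\mathcal{M}'_{k}$ are known to satisfy (from the earlier BV-MCMC convergence theorem), so that the domination argument actually yields what we need. Under the natural reading (irreducible plus aperiodic, or equivalently positive after finitely many steps), the argument above goes through cleanly with the mixture structure doing all of the work.
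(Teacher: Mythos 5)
Your proposal is correct and follows essentially the same route as the paper: both write the aggregate kernel as the uniform mixture $\frac{1}{K}\sum_k T_k$ and obtain stationarity of $\pi$ by linearity, then invoke regularity to pin down the limit. Your domination argument $(P^{*})^{n} \geq \frac{1}{K^{n}}(P'_{1})^{n}$ is in fact a more rigorous justification of regularity than the paper's informal remark about a non-zero probability of returning to the same state, but it is the same idea of inheriting regularity from the component chains.
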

\begin{proof}
Given each of BV-MCMC($\alpha$) Markov chains $\mathcal{M'}_k$ are regular, firstly, we prove that the aggregate Markov chain is regular. In each step of aggregate chain, one of the BV-MCMC($\alpha$) is applied and since, there is non-zero probability of returning to the same state in BV-MCMC($\alpha$) chain, there is non-zero probability of returning to the same state in $\mathcal{M}^{*}$ . Hence, aggregate chain so defined is regular and therefore, it converges to a unique stationary distribution. \cite{koller&friedman09}. \\
The only fact that remains to be shown is that the stationary distribution of $\mathcal{M}^{*}$ is  $\pi$.
Let $T^{*}(s \rightarrow s')$ represent the transition probability of going from state $s$ to $s'$ in aggregate chain $\mathcal{M}^{*}$.
We need to show that
\begin{equation}
\pi(s') = \sum_{s \in \mathcal{S}} \pi(s) * T^{*} (s \rightarrow s') 
\end{equation}
Let $T_{k}(s \rightarrow s')$ represent the transition probability of going from state $s$ to $s'$ in $\mathcal{M'}_k$
\begin{equation}{\label{eqn:1}}
\sum_{s \in \mathcal{S}} \pi(s) * T^{*} (s \rightarrow s') 
= \sum_{s \in \mathcal{S}} \pi(s) * \frac{1}{K}* \sum_{k =1}^{K} T_{k} (s \rightarrow s') 
\end{equation}
\begin{equation} {\label{eqn:2}}
 =  \frac{1}{K} \sum_{k =1}^{K} \sum_{s \in \mathcal{S}} \pi(s) * T_{k} (s \rightarrow s') 
=  \frac{1}{K} \sum_{k =1}^{K} \pi(s') = \pi(s') 
\end{equation}
Equation \ref{eqn:1} follows from the definition of aggregate chain while equation \ref{eqn:2} holds since $\mathcal{M'}_k$ converges to stationary distribution $\pi$.

\end{proof}


Aggregate Markov chain $\mathcal{M}^{*}$ so obtained not only converges to the correct stationary distribution but also results in faster mixing since it can exploit the symmetries associated with each of the individual orbital Markov chains.

\section{HEURISTICS FOR BLOCK PARTITIONS}
\label{sec:5}

We have so far computed BV symmetries \emph{given} a specific block partition. We now discuss our heuristic that suggests candidate block partitions for downstream symmetry computation (see supplementary material for pseudo-code). At a high level, our heuristic has the following two desiderata. Firstly, it ensures that there are no overlapping blocks, i.e., one variable is always in one block. Secondly, it guesses which blocks might exhibit BV-symmetries, and encourages such blocks in a partition. 

The heuristic takes the hyperparameter $r$, the maximum size of a block, as an input. It considers only those blocks (upto size $r$) in which for each variable in the block, there exists at least one other variable from the same block, such that some clause in $\mathcal{G}$ contains both of them. This prunes away blocks in which variables do not directly interact with each other, and thus are unlikely to produce symmetries. Note that these candidate blocks can have overlapping variables and hence not all can be included in a block partition.

For these candidate blocks, for each block-value pair, the heuristic computes a weight signature. The weight signature is computed by multiplying weights of all the clauses that are made true by the specific block-value assignment. The heuristic then buckets all BV pairs of the same size based on their weight signatures. The cardinality of each bucket (i.e., the number of BV pairs of the same size that have the same weight signature) is calculated and stored. 

The heuristic samples a block partition as follows. At each step it samples a bucket with probability proportional to its cardinality and once a bucket is selected, then it samples a block from that bucket uniformly at random, as long as the sampled block doesn't conflict with existing blocks in the current partition i.e., it has no variables in common with them. This process is repeated until all variables are included in the partition. In the degenerate case, if a variable can't be sampled from any block of size 2 or higher, then it gets sampled as an independent block of size 1. Once a partition is fully sampled, it is stored and the process is reset to generate another random block partition.

This heuristic encourages sampling of blocks that are part of a larger bucket in the hope that multiple blocks from the same bucket will likely yield BV symmetries in the downstream computation. At the same time, the non-conflicting condition and existence of single variable blocks jointly ensure that each sample is indeed a bona fide block partition.

\begin{table*}[] 
\centering
\begin{tabular}{|l|l|l|l|}
\hline
Domain                                                        & Rules                                                                                                                                                                                                                                                                                   & Weights                                                                               & Variables                                                                           \\ \hline
Job Search                                                    & \begin{tabular}[c]{@{}l@{}}$\forall$ x TakesML(x) $\land$ GetsJob(x)\\ $\forall$ x $\lnot$TakesML(x) $\land$ GetsJob(x)\\ $\forall$ (x,y) Connected(x,y) $\land$ TakesML(x) $\Rightarrow$ TakesML(y)\end{tabular}                                                                                           & \begin{tabular}[c]{@{}l@{}}+$w_1$\\ +$w_2$\\ $w_3$\end{tabular}                       & \begin{tabular}[c]{@{}l@{}}TakesML(x),\\ GetsJob(x), \\ Connected(x,y)\end{tabular} \\ \hline
\begin{tabular}[c]{@{}l@{}}Student \\ Curriculum\end{tabular} & \begin{tabular}[c]{@{}l@{}}$\forall$ x Maths(x) $\land$ CS(x)\\ $\forall$ x Maths(x) $\land$ $\lnot$CS(x)\\ $\forall$ x $\lnot$Maths(x) $\land$ CS(x)\\ $\forall$ x $\lnot$Maths(x) $\land$ $\lnot$CS(x)\\ $\forall$ (x,y) $\in$ Friends, Maths(x) $\Rightarrow$ Maths(y)\\ $\forall$ (x,y) $\in$ Friends, CS(x) $\Rightarrow$ CS(y)\end{tabular} & \begin{tabular}[c]{@{}l@{}}+$w_1$\\ +$w_2$\\ +$w_3$\\ +$w_4$\\ $w$\\ $w$\end{tabular} & \begin{tabular}[c]{@{}l@{}}Maths(x)\\ CS(x)\end{tabular}                            \\ \hline
\end{tabular}
\caption{Description of the two domains used in experiments. A weight of the form $+w_1$ indicates that the weight is randomly sampled for each object.}
\label{table:domains}

\end{table*}

\section{EXPERIMENTS}
\label{sec:6}

Our experiments attempt to answer two key research questions. (1) Are there realistic domains where BV symmetries exist but VV symmetries do not? (2) For such domains, how much faster can an MCMC chain mix when using BV symmetries compared to when using VV symmetries or not using any symmetries? 


\subsection{Domains}
To answer the first question, we construct two domains.  The first domain models the effect of an academic course on an individual's employability, whereas the second domain models the choices a student makes in completing their course credits. Both domains additionally model the effect of one's social network in these settings. Table \ref{table:domains} specifies the weighted first order formulas for both the domains.

\textbf{Job Search:}
In this domain, there are $N$ people on a social network, looking for a job. Given the AI hype these days, their employability is directly linked with whether they have learned machine learning (ML) or not. Each person $x$ has an option of taking the ML course, which is denoted by $TakesML(x)$. Furthermore, the variable $Connected(x,y)$ denotes whether two people $x$ and $y$ are connected in the social network or not. Finally, the variable $GetsJob(x)$ denotes whether $x$ gets employment or not. 
 
In this Markov Logic Network (MLN)\cite{domingos&lowd09}, each person $x$ participates in three kinds of formulas. The first one with weight $w_1$ indicates the (unnormalized) probability of the person getting a job and taking the ML course ($TakesML(x)\wedge GetsJob(x)$). The second formula with weight $w_2$ indicates the chance of the person getting a job while not taking the course ($\neg TakesML(x)\wedge GetsJob(x)$). Our domain assigns different weights $w_1$ and $w_2$ for each person, modeling the fact that each person may have a different capacity to learn ML, and that other factors may also determine whether they get a job or not. Finally, $x$ is more likely to take the course if their friends take the course. This is modeled by an additional formula for each pair $(x,y)$, with a fixed weight $w_3$. 

In this domain, there are hardly any VV symmetries, since every $x$ will likely have different weights. However there are {\em intra-block} BV symmetries for the block ($TakesML(x), GetsJob(x)$) for every $x$. This is because within the potential table of this block the block values (0, 0) and (1, 0) are symmetric and can be permuted.

\begin{figure*}[t!]
\centering
\subfloat[][]{
   \includegraphics[width=0.33\textwidth]{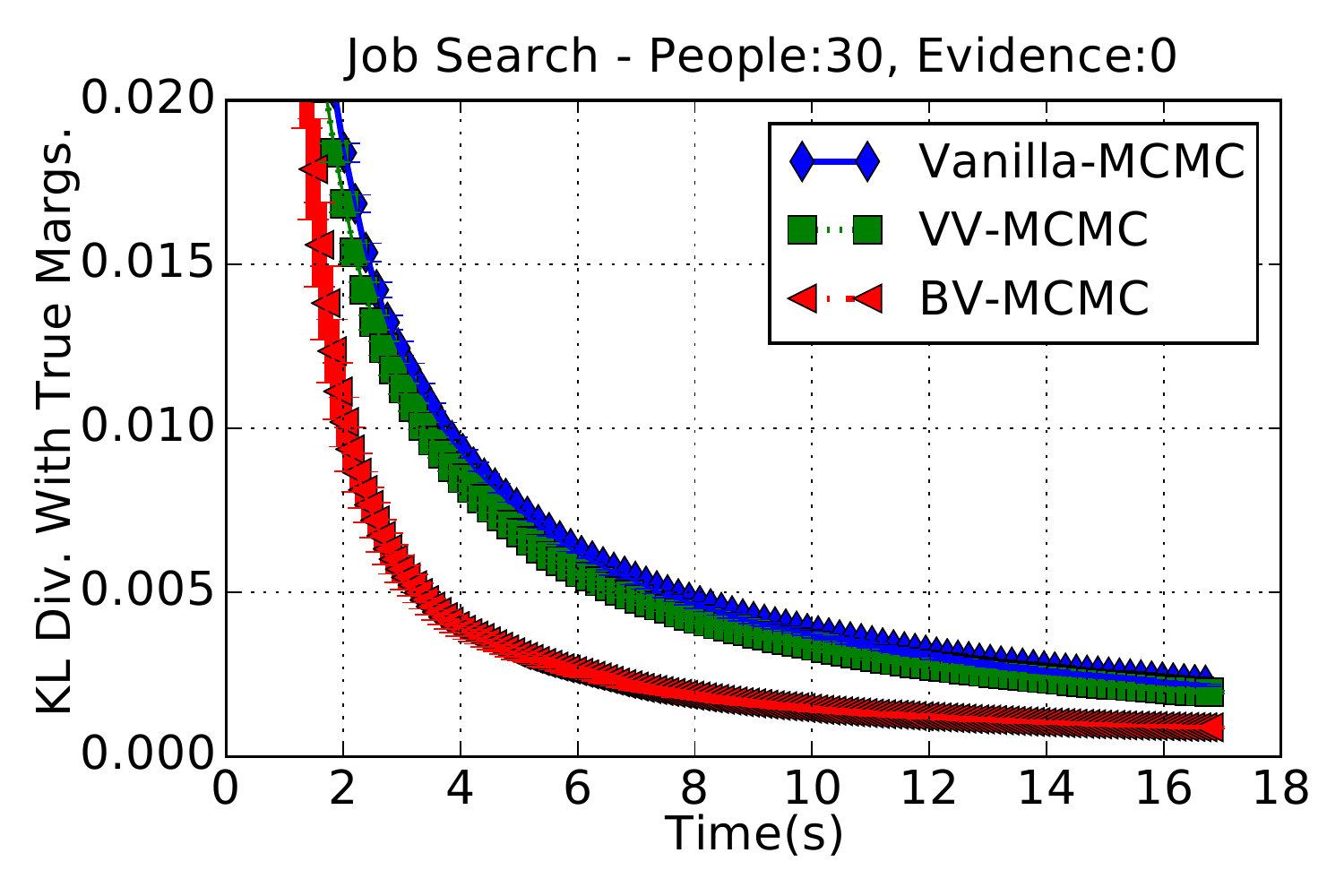}
 }
\subfloat[][]{
   \includegraphics[width=0.33\textwidth]{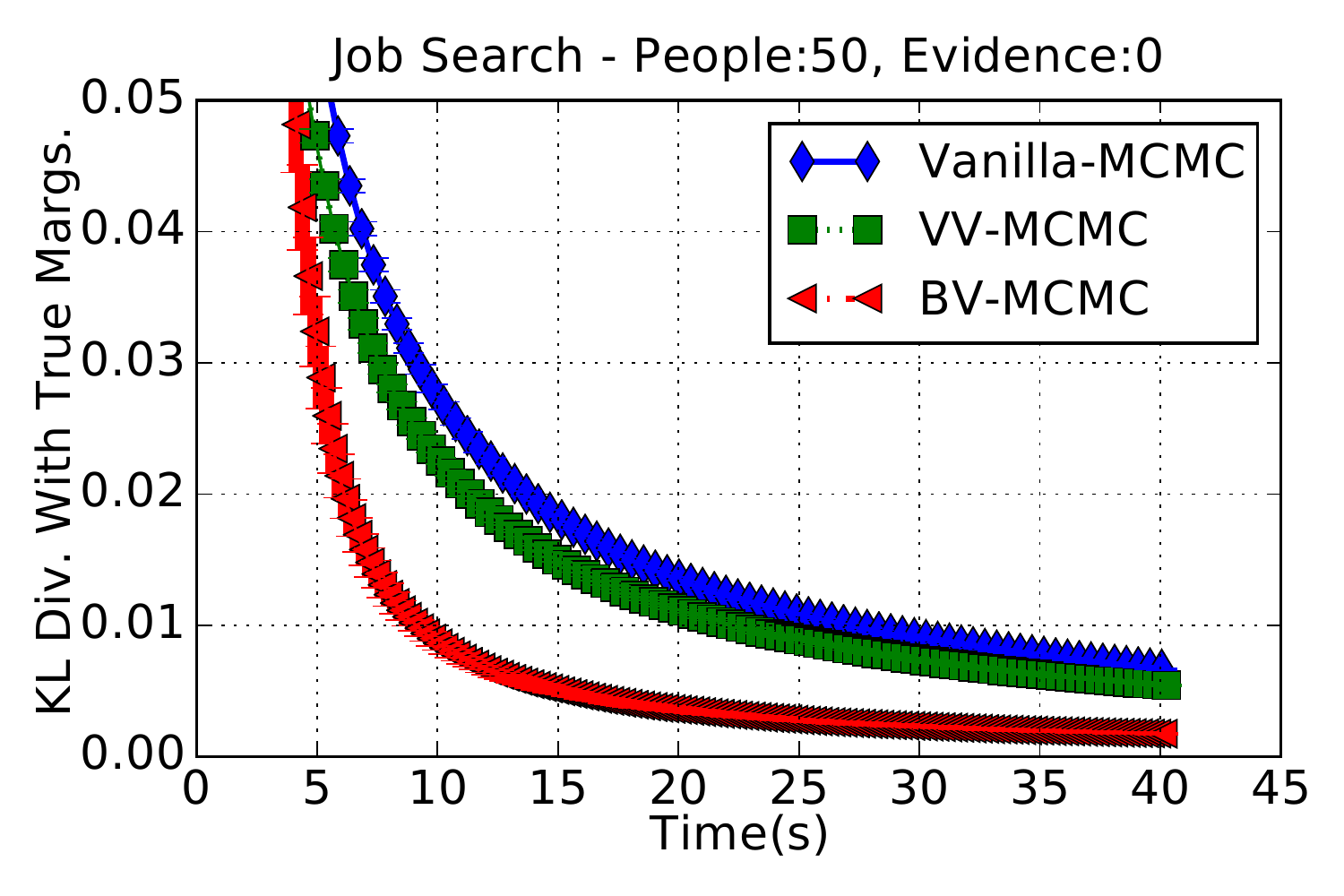}
 }
\subfloat[][]{
   \includegraphics[width=0.33\textwidth]{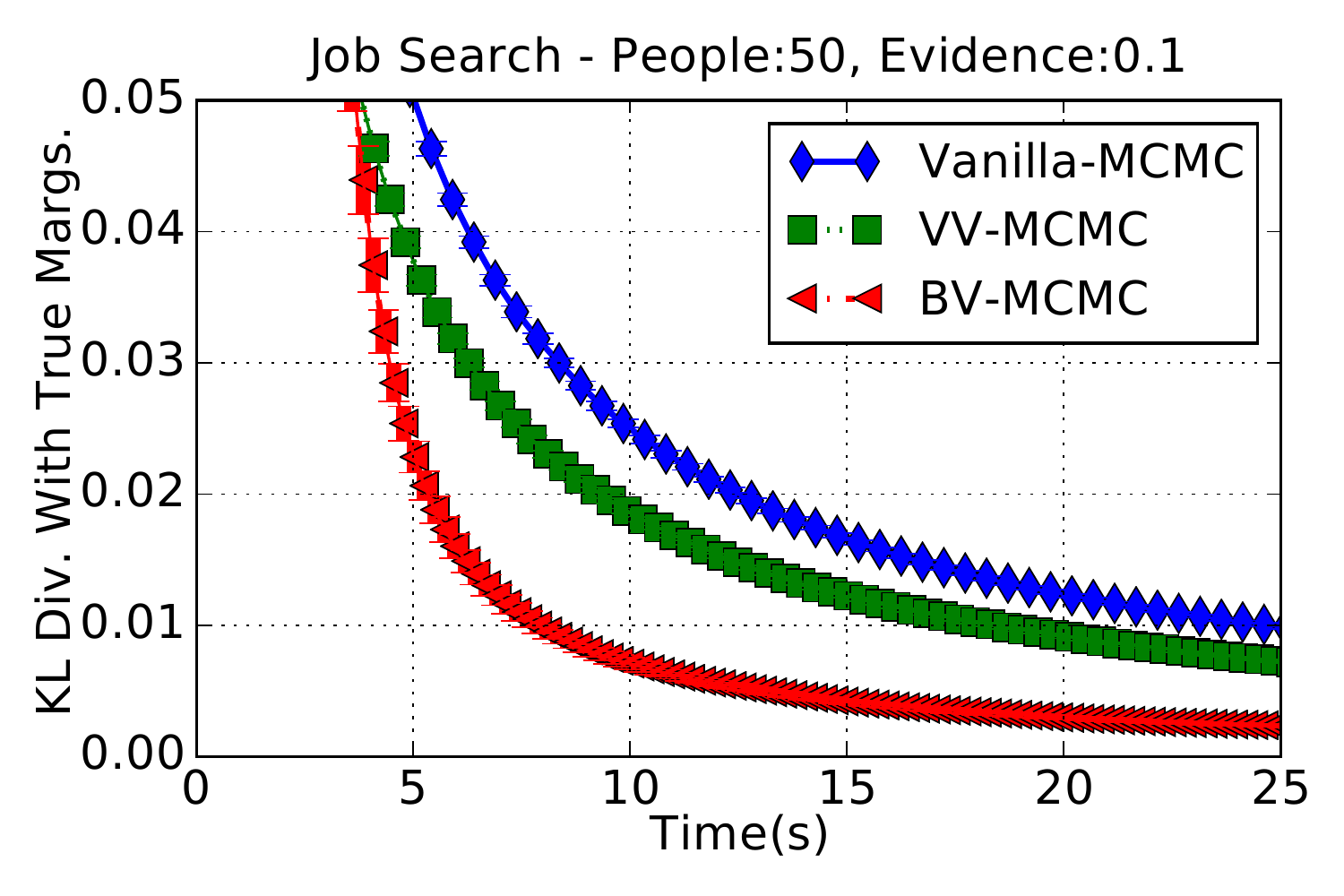}
 }

\subfloat[][]{
   \includegraphics[width=0.33\textwidth]{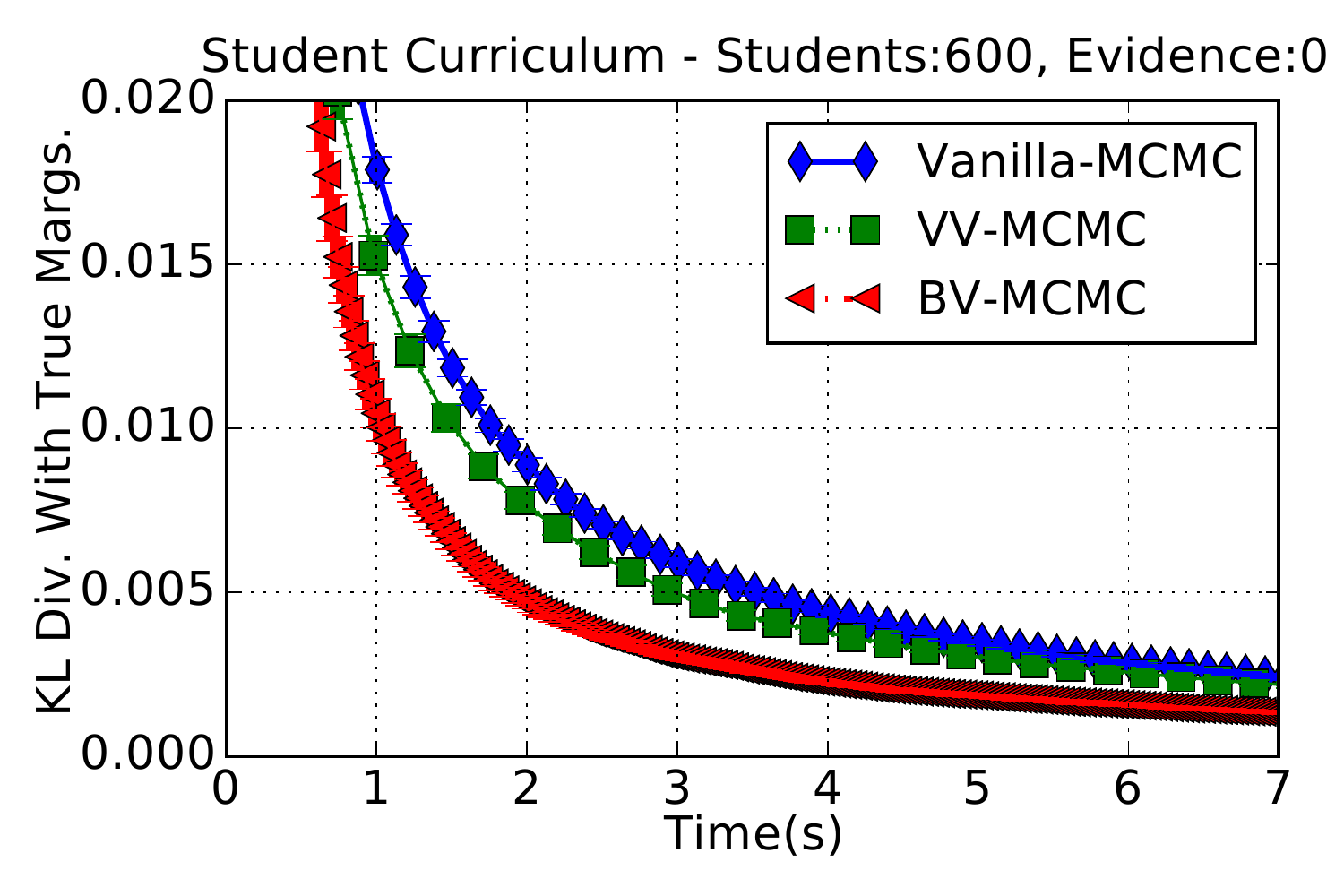}
 }
\subfloat[][]{
   \includegraphics[width=0.33\textwidth]{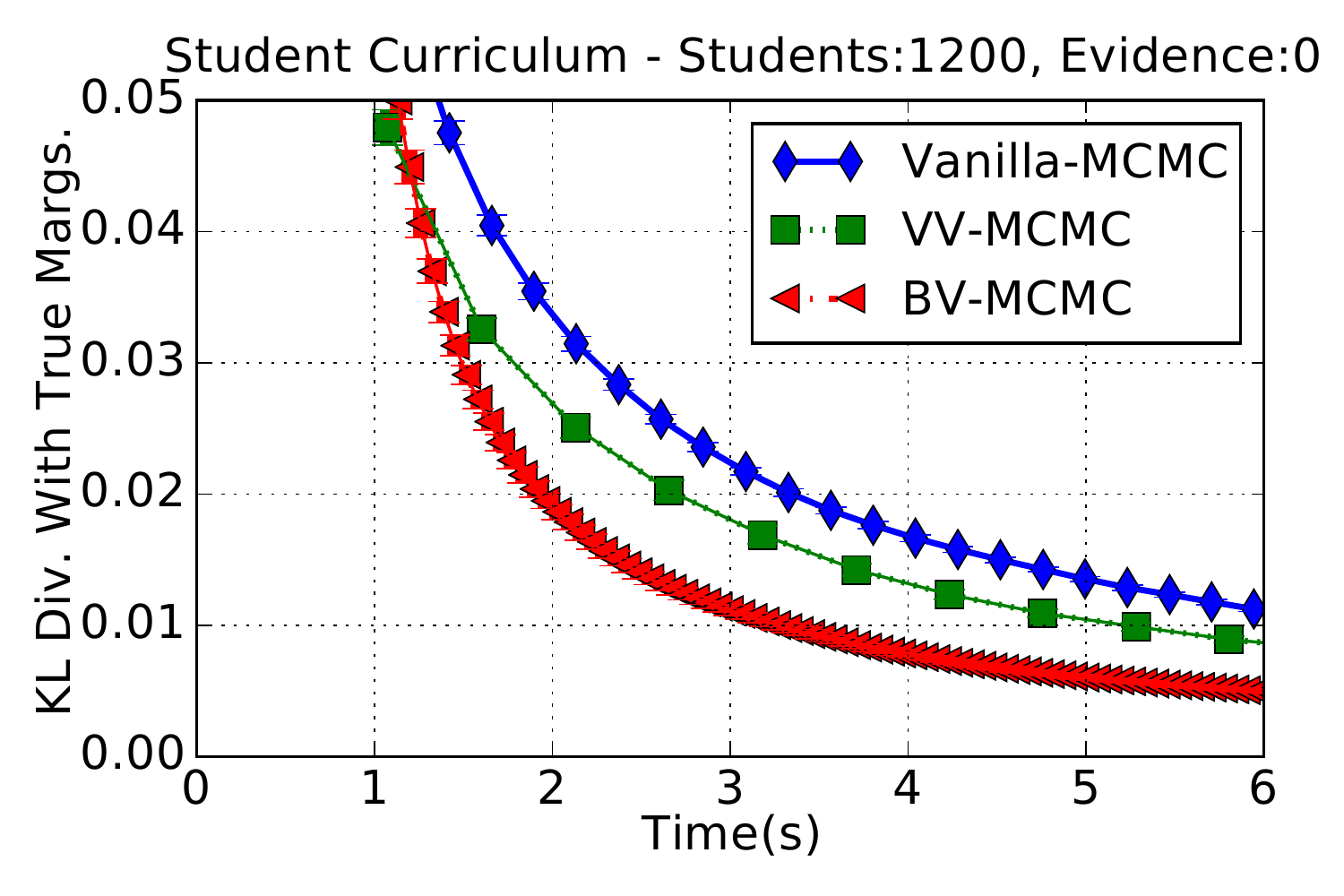}
 }
\subfloat[][]{
   \includegraphics[width=0.33\textwidth]{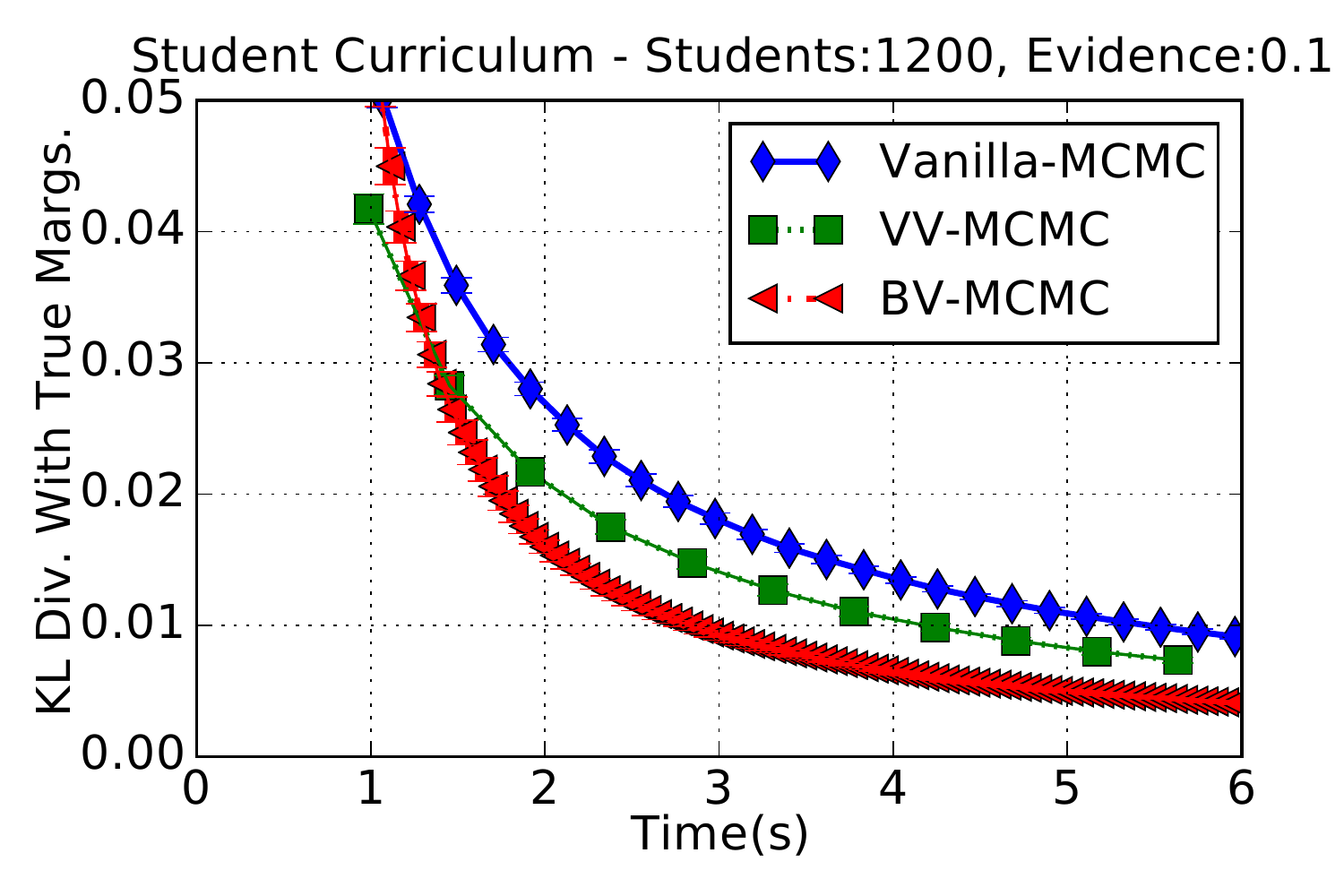}
 }

\caption{BV-MCMC($\alpha = 1$) and BV-MCMC($\alpha = 0.02$) outperforms VV-MCMC and Vanilla MCMC on Job Search and Student Curriculum domains respectively with different size and evidence variations} 
\label{fig:exp}
\end{figure*}

\textbf{Student Curriculum:} In this domain, there are $N$ students who need to register for two courses, one from Mathematics and one from Computer Science to complete their course credits. There are two courses (basic or advanced) on offer in both disciplines. Variables $Math(x)$ and $CS(x)$ denote whether the student $x$ would take the advanced course in each discipline. Since courses for Mathematics and CS could be related, each student needs to give a joint preference amongst the 4 available options. This is modeled as a potential table over ($Math(x), CS(x)$) with weights chosen randomly from a fixed set of parameters. Further, some students may also be friends. Since students are more likely to register in courses with their friends, we model this as an additional formula, which increases the probability of registering for a course in case a friend registers for the same. 

In this domain, VV pairs can only capture symmetries when the potential tables (over $Math$ and $CS$) for two students are exactly the same. 
However, there are a lot more \emph{inter-block} BV symmetries since it is more likely to find pairs of students, whose potential tables use the same set of weights, but in a different order.

\subsection{Comparison of MCMC Convergence}

We now answer our second research question by comparing the convergence of three Markov chains -- Vanilla-MCMC, VV-MCMC, and BV-MCMC($\alpha$). All three use Gibbs sampling as the base MCMC chain.
All experiments are done on Intel Core i7 machines. Following previous work, and for fair comparison, we implement all the three Markov chains in group theoretic package - GAP \cite{gap15}. This allows the use of off-the-shelf group theoretic operations. The code for generating candidate lists is written in C++. We solve graph isomorphism problems using the Saucy software \cite{saucy}. We release our implementation for future use by the community \footnote{https://github.com/dair-iitd/bv-mcmc}.



\begin{figure*}[t!]
\centering
{\includegraphics[width=0.48\textwidth]{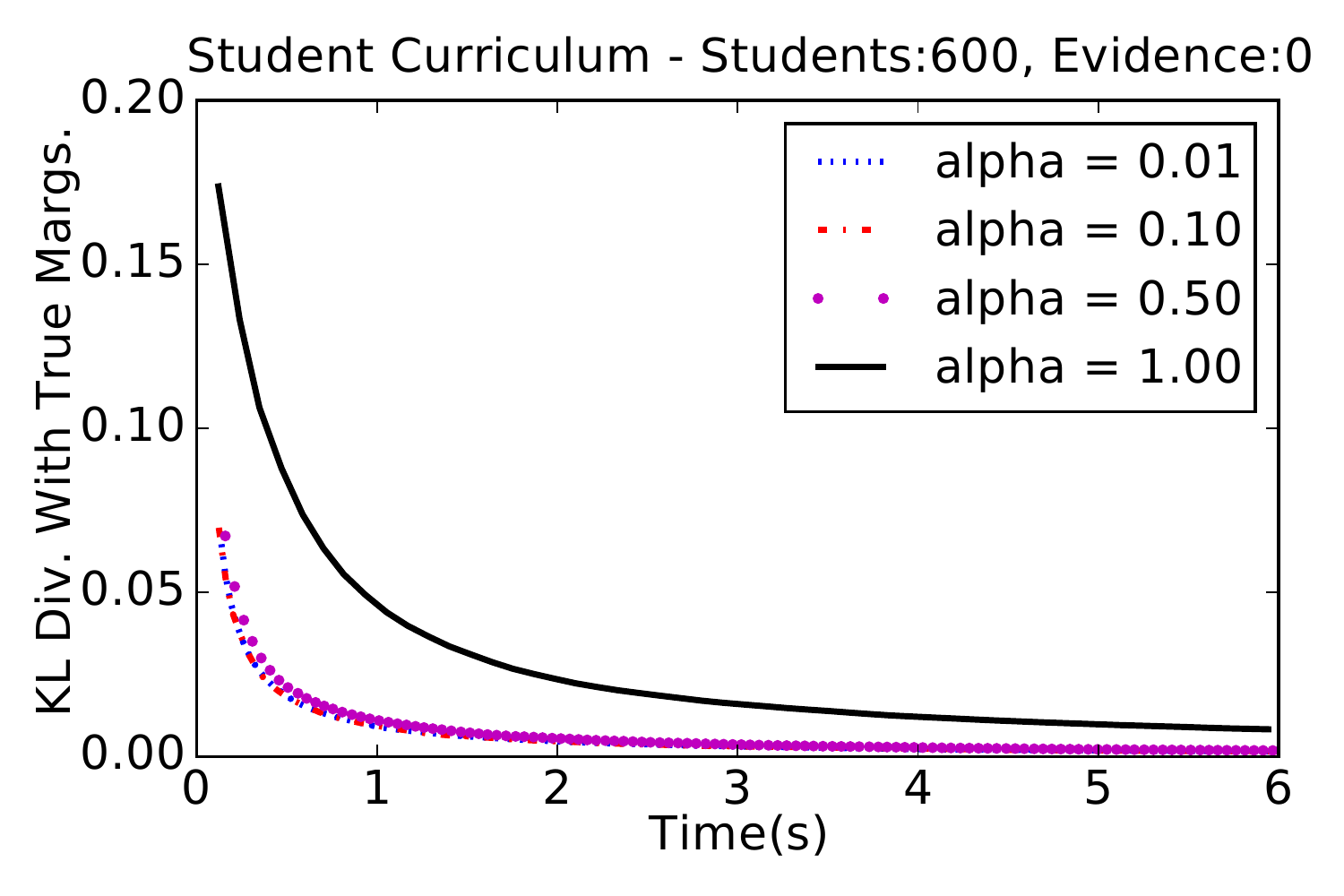}}
{\includegraphics[width=0.48\textwidth]{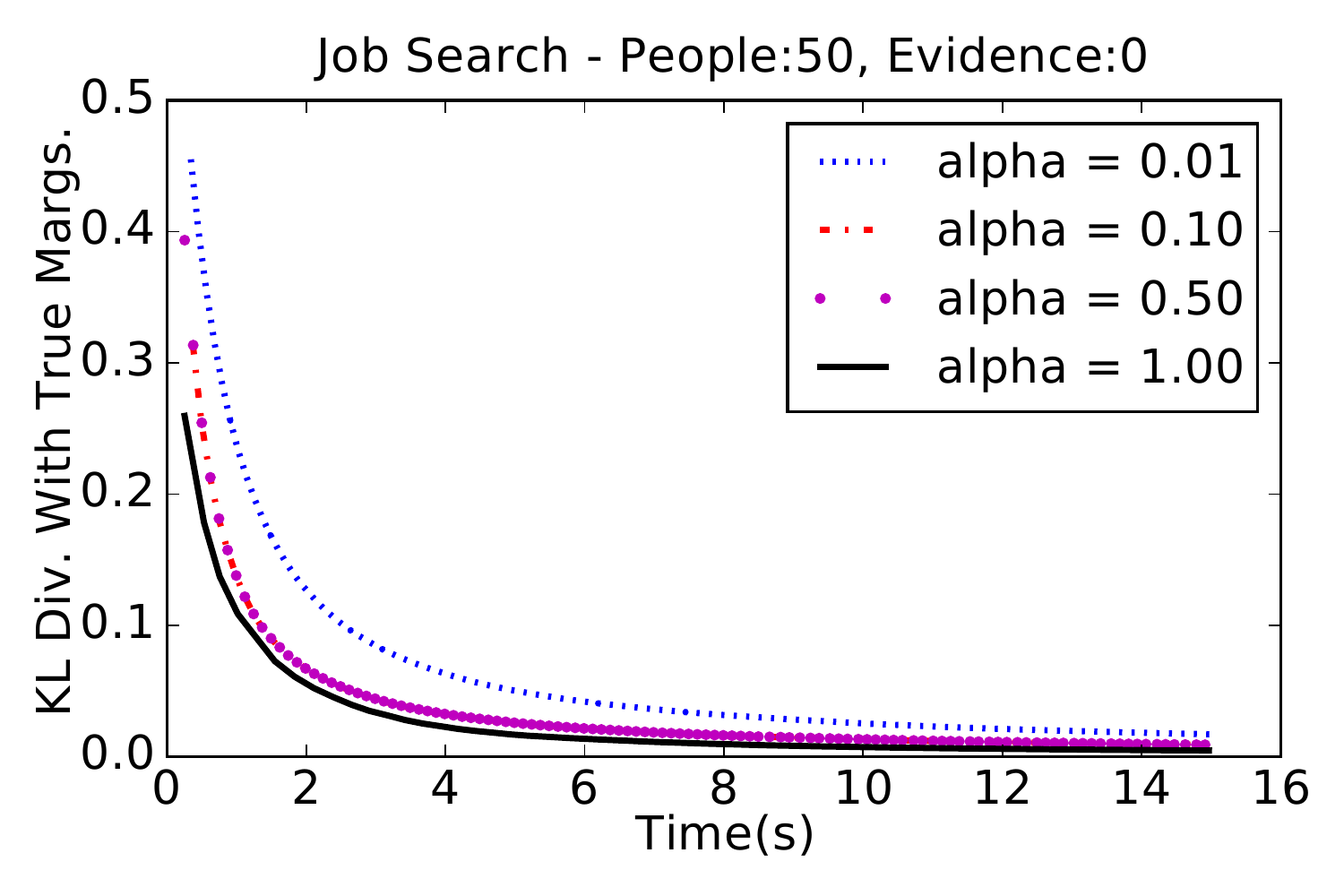}}

\caption{ Variation on $\alpha$ $\alpha < 1$ is significantly better than $\alpha=1$ in Student-Curriculum domain while $\alpha=1$ is best in Job-Search domains  } 
\label{fig:course_alpha}
\end{figure*}

In all experiments, we keep the maximum block size in a block partition to be two. For each chain we plot the KL divergence of true marginals and computed marginals for different runtimes. We estimate true marginals by running the Gibbs sampling algorithm for a sufficiently long period of time. Each algorithm is run 20 times to compute error bars indicating 95\% confidence interval. 

For VV-MCMC and BV-MCMC, the run time on x-axis includes the pre-processing time of computing symmetries as well. For BV-MCMC, this includes the time for generating candidate lists, running Saucy for each candidate list, and initializing the Product Replacement algorithm for each candidate lists. The total preprocessing time for Job Search domain is around 1.6 sec and for Student Curriculum domain is around 0.6 sec.


Figures \ref{fig:exp} shows that  BV-MCMC substantially outperforms VV-MCMC and Vanilla-MCMC in both the domains. The parameter $\alpha$ is set to 1.0 for Job Search Domain and 0.02 for Student Curriculum Domain. Since these domains do not have many VV-Symmetries, VV-MCMC only marginally outperforms Vanilla MCMC. On the other hand BV-MCMC is able to exploit a considerably larger number of symmetries and leads to faster mixing. BV-MCMC scales well with domain size, significantly outperforming other algorithms as domain size is changed from 30 to 50 people in Job Search and 600 to 1200 in Student Curriculum domain. This is particularly due to more symmetries being captured by BV-MCMC for larger domain sizes. \footnote{Most of the error-bars are negligible in size.}

Figure \ref{fig:exp}(c) and \ref{fig:exp}(f) plot the variation with introduction of 10\% evidence in each domain. BV MCMC still outperforms VV-MCMC and Vanilla-MCMC and is robust to presence of evidence. 

Finally, we also test the sensitivity of BV-MCMC with the $\alpha$ parameter. Figure \ref{fig:course_alpha} plots this variation on both these domains. We find that for Job Search, a high value $\alpha=1$ performs the best, whereas a lower value is better in Student Curriculum. This is because Job Search mostly has intra-block BV symmetries, which can be computed and applied efficiently. This makes sampling an orbital step rather efficient. On the other hand, for Student Curriculum, the inter-block symmetry between different pairs of people makes the orbital step costlier, and reducing the fraction of times an orbital move is taken improves the overall performance.

\section{CONCLUSIONS}
\label{sec:7}

Permutations defined over variables or variable-value (VV) pairs miss a significant fraction of state symmetries. We define permutations over block-value (BV) pairs, which enable a subset of variables (block) and their assignment to jointly permute to another subset. This representation is exponential in the size of the maximum block $r$, but captures more and more state symmetries with increasing $r$.

Novel challenges arise when building the framework and algorithms for  BV permutations. First, we recognize that all BV permutations do not lead to valid state symmetries. For soundness, we impose a sufficient condition that each BV permutation must be defined on blocks with non-overlapping variables. Second, to compute BV symmetries, we describe a graph-isomorphism based solution. But, this solution expects a block partition as an input, and we cannot run it over all possible block partitions as they are exponential in number. In response, we provide a heuristic that outputs candidate block partitions, which will likely lead to BV symmetries. Finally, since the orbits from different block partitions may have overlapping variables, they cannot be explicitly composed in compact form. This makes it difficult to uniformly sample from the aggregate orbit (aggregated over all block partitions). To solve this challenge, we modify the Orbital MCMC algorithm so that in the orbital step, it uniformly samples from the orbit from any one of the block partitions (BV-MCMC). We prove that this aggregate Markov chain also converges to the true posterior.

Our experiments show that there exist domains in which BV symmetries exist but VV symmetries may not. We find that BV-MCMC mixes much more rapidly than base MCMC or VV-MCMC, due to the additional mixing from orbital BV moves. 
Overall, our work provides a unified representation for existing research on permutation groups for state symmetries. In the future, we wish to extend this notion to approximate symmetries, so that they can be helpful in many more realistic domains as done in earlier works \cite{habeeb&al17}.

\section*{ACKNOWLEDGEMENTS}

We thank anonymous reviewers for their comments and suggestions and Happy Mittal for useful discussions. Ankit Anand is supported by the TCS Fellowship. Mausam is supported by grants from Google and Bloomberg. 
Parag Singla is supported by the DARPA Explainable Artificial Intelligence (XAI) Program with number N66001-17-2-4032. Both Mausam and Parag Singla are supported by the Visvesvaraya Young Faculty Fellowships by Govt. of India and IBM SUR awards. Any opinions, findings, conclusions or recommendations expressed in this paper are those of the authors and do not necessarily reflect the views or official policies, either expressed or implied, of the funding agencies.

\nocite{niepert&broeck14}
\nocite{pak00}
\nocite{madan&al18}
\cleardoublepage
\bibliographystyle{named}
\bibliography{references}
\newpage
\title{}
\title{Supplementary Material: \\ Block-Value Symmetries in Probabilistic Graphical Models $^{*}$}
\date{}
\maketitle

\section*{Algorithmic and Implementation Details for Finding Block Partitions}
This section provides algorithmic and implementation details for the heuristic used to find the candidate set of block partitions (Section 5). 
There are three broad steps for obtaining a good candidate set and each one of them is described below in turn.

Algorithm \ref{alg:getUseful}: Procedure $Get\_Useful\_Blocks$ takes a parameter $r$ and computes potentially useful blocks with maximum block-size $r$. It iterates over each of the features in turn and selects all possible subsets of size $\leq r$ of variables which are part of that feature (lines 2-7). This automatically eliminates all $r$ (or less) sized blocks which are composed of variables that never appear together in any feature in the graphical model.
\begin{algorithm}
\caption{Get\_Useful\_Blocks($\G, r$)}\label{alg:getUseful}
\begin{algorithmic}[1]
\State useful\_blocks $\gets$ \{\}
\For{f $\in$ features($\mathcal{G}$)}
	\ForAll{b $\subset$ Var(f) and Size(b)$\leq$ r}
		\State useful\_blocks $\gets$ useful\_blocks $\cup$ b
	\EndFor
\EndFor
\State \Return useful\_blocks
\end{algorithmic}
\end{algorithm}

Algorithm \ref{alg:getColour}: For the useful blocks obtained above, our heuristic constructs a weight signature for each of the block-value pairs. 
Procedure $Get\_Weight\_Sign$ computes a weight signature for all the features consistent with the input BV pair($B,b$). We define the Feature Blanket of a variable $X_j \in B$ as the set of features in which $X_j$ appears. In line 1, we construct feature blanket of a block $B$ by taking union of the feature blankets of all the variables appearing in the block. Line 2 initializes the signature as an empty multi-set. We construct weight signature by iterating over features present in feature blanket of this block. For each feature $f_j$, we check whether the given BV pair ($B,b$) is consistent with $f_j$, i.e., whether the feature is satisfied by the block-value pair.
The weight of $f_j$ is inserted in the signature if the consistency requirement is met (line 5). The complete weight-signature so obtained after iterating over all the features in the blanket is returned as the weight-signature 
for the BV pair ($B,b$).

\begin{algorithm}
\caption{Get\_Weight\_Sign($\G, B, b$)}\label{alg:getColour}
\begin{algorithmic}[1]
\State feature\_blanket($B$) $\gets \bigcup\limits_{X_j \in B}$ FeatureBlanket($X_j$)
\State signature $\gets$ \{\}
\For{f $\in$ feature\_blanket}
	\If{$(B,b)$ is consistent with f}
    	\State Insert weight(f) in signature
    \EndIf
\EndFor
\State \Return signature
\end{algorithmic}
\end{algorithm}

Algorithm \ref{alg:getBP}: This makes use of the two procedures described above and outlines the complete process for generating multiple block partitions. It takes as input a Graphical Model $\mathcal{G}$ and maximum block-size $r$. After obtaining useful blocks, a weight signature dictionary is constructed with key as weight-signature and value as a list of blocks. For each block $B_i \in useful\_blocks$, we iterate over all value assignments of that block ($\mathcal{V}(B_i)$) to form all possible BV pairs (lines 3,4). For each BV pair, Procedure $Get\_Weight\_Sign$ computes the weight-signature $S_j$ for that BV pair (line 5). If $S_j$ has already been seen in dictionary, the current block is appended to the list of blocks corresponding to the signature (lines 6,7). Else, a new weight-signature along with the list of singleton  block is added to the dictionary (lines 8-10). 

Once the weight-signature dictionary is built, we generate useful candidate lists by picking blocks using the weight signature dictionary (loop at line 14). Line 15 initializes an empty candidate list. Blocks are added to the candidate list in iterative fashion until all the variables are included (line 16). A two step sampling procedure is used. The first step samples a weight-signature with a probability proportional to the size of its corresponding list of blocks (line 17). The second step samples a block uniformly from the list of blocks 
sampled in the first step. The sampled block is added to the current candidate list if it does not overlap with pre-existing blocks (lines 19-21) otherwise a new block is sampled as above. Once all variables are added, the candidate list is complete and the process is run again till $max\_candidate\_list$ number of lists are generated.   

\begin{algorithm}
\caption{Generate\_Block\_Partitions($\G, r$)}\label{alg:getBP}
\begin{algorithmic}[1]
\State useful\_blocks $\gets $ Get\_Useful\_Blocks($\G, r$)
\State $Weight\_Sign\_Dict \gets \{\}$ 
\ForAll{$B_i \in useful\_blocks$}
	\ForAll{$b_i \in \mathcal{V}(B_i)$}
    	\State $S_i \gets$ Get\_Weight\_Sign($\mathcal{G},B_i, b_i$)
        \If{$S_i \in Weight\_Sign\_Dict$}
        	\State Append $B_i$ to $Weight\_Sign\_Dict[S_i]$
        \Else
        	\State Insert $[S_i, [B_i]]$ to $Weight\_Sign\_Dict$ 
        \EndIf
	\EndFor
\EndFor
\State $Candidate\_List \gets$ [ ]
\For{$i \gets 1 \to$ max\_candidate\_lists}
	\State $CL \gets$ \{ \} 
	\While{All variables not included in $CL$}
        
		\State Sample $S_j$ with probability $\propto$   
 $|Weight\_Sign\_Dict[S_j]|$
        \State Sample a block $b$ uniformly from $Weight\_Sign\_Dict[S_j]$
        
        \If{Variables($b$) $\cap$ $CL$ = $\phi$}
        	\State $CL \gets CL \cup b$   
        \EndIf
    \EndWhile
    \State $Candidate\_List \gets Candidate\_List \cup CL$
\EndFor
\State \Return{$Candidate\_List$} 
\end{algorithmic}
\end{algorithm}


\end{document}